\documentclass{article}
\PassOptionsToPackage{numbers}{natbib}
\usepackage{natbib}

\IfFileExists{GCSCS.sty}{
    \usepackage[preprint]{GCSCS}
    \usepackage[T1]{fontenc}
    \usepackage{iftex}
    \ifPDFTeX
      \usepackage[utf8]{inputenc}
    \fi
}{
    \usepackage[a4paper, top=2.5cm, bottom=2.5cm, left=2cm, right=2cm]{geometry}
    \usepackage{fontspec}
    \usepackage[english, bidi=basic, provide=*]{babel}
    \babelprovide[import, onchar=ids fonts]{english}
    \babelfont{rm}{Noto Serif}
}

\usepackage{xcolor}

\PassOptionsToPackage{hyphens}{url}
\usepackage{hyperref}
\usepackage{url}

\usepackage{booktabs}
\usepackage{amsmath}
\allowdisplaybreaks[1]
\usepackage{amsthm}
\usepackage{amssymb}
\newtheorem{theorem}{Theorem}
\newtheorem{definition}{Definition}
\newtheorem{remark}{Remark}

\usepackage{graphicx}
\usepackage{silence}
\usepackage[babel=false]{microtype}
\usepackage{subcaption}
\usepackage{float}
\usepackage{algorithm}
\usepackage{algorithmic}
\usepackage{wrapfig}
\newlength{\FigFailureWrapSep}
\usepackage{needspace}
\IfFileExists{placeins.sty}{\usepackage{placeins}}{\newcommand{\FloatBarrier}{\clearpage}}
\usepackage{marvosym}
\newcommand{\coremail}{\raisebox{0.05ex}{\textsuperscript{\footnotesize\Letter}}}

\AddToHook{begindocument/end}{\raggedbottom}

\title{Scaling Multi-Hop Training Data via\\[0.2em] Graph-Constrained Path Selection}

\author{%
  {\bfseries
  Pengyu Chen$^{1,2}$ \quad
  Yonggang Zhang$^{1,2}$ \quad
  Mingming Chen$^{1,2}$ \quad
  Jun Song$^{3}$
  } \\[0.4em]
  {\bfseries Wei Xue$^{1,2}$}\coremail \quad
  {\bfseries Yike Guo$^{1,2}$}\coremail \\[0.55em]
  $^{1}$The Hong Kong University of Science and Technology \\[0.2em]
  $^{2}$Hong Kong Generative AI Research and Development Center \\[0.2em]
  $^{3}$Hong Kong Baptist University
}

\begin{document}

\maketitle
\GCSCScorrespondence{%
  \textsuperscript{\footnotesize\Letter}\,Correspondence to: Wei Xue \textless weixue@ust.hk\textgreater, Yike Guo \textless yikeguo@ust.hk\textgreater.%
}

\begin{abstract}
Endowing large language models with compositional reasoning over specialized documents requires multi-hop training data at scale, where such data rarely exists outside of curated benchmarks built on structured sources. To construct it directly from plain, unannotated text, existing methods ask a single teacher model to jointly discover an evidence path through a document and verbalize it as a question-answer pair. However, these methods degrade sharply when documents are structured around repetitive templates and densely cross-referencing clauses, conditions that characterize most real-world specialized corpora. In this work, we decouple the two operations: reasoning paths are enumerated offline over a graph of contextual keyword centroids, and the teacher is invoked only to verbalize pre-validated paths. The graph enforces five geometric admissibility constraints, for which we provide Gram-matrix arguments establishing that local similarity bounds alone admit endpoint drift up to ${\sim}91^{\circ}$, and that an upper similarity bound is necessary to exit dense embedding cliques formed by boilerplate text. A matched-size ablation isolates the mechanism: at equal training scale, constrained and unconstrained chains yield indistinguishable downstream performance, and the gain at full scale comes from a 4.4$\times$ expansion of the usable corpus rather than from higher per-chain quality---reframing the role of graph constraints, in this setting, as raising teacher synthesizability rather than improving chain content. Fine-tuning Qwen3-32B on 80K examples constructed from the CUAD legal contract corpus improves closed-book Token F1 from 21.66\% to 38.58\%.
We have released our codes at \url{https://github.com/hkgai-official/GCSCS}.
\end{abstract}

\section{Introduction}
\label{sec:intro}
Large language models can recall isolated facts yet often fail to compose them into reliable multi-hop answers \cite{press2023measuring,ho2020constructing,geva2021did}. The gap is acute in legal contract analysis, where a useful answer may require linking a definition, an obligation, an exception, and a termination condition across different clauses \cite{hendrycks2021cuad}. Human-authored multi-hop chains exist for general-domain benchmarks \cite{yang2018hotpotqa,trivedi2022musique,talmor2018web}, but comparable domain-specific supervision is costly to produce \cite{hendrycks2021cuad}.

General synthetic instruction pipelines \cite{wang2023selfinstruct,xu2024wizardlm,ouyang2022training,guo2025synthetic,zhu2026CHIMERA,liu2026designer} require the teacher LLM to execute two tasks in a single prompt: discover connected evidence scattered across a document, and write the final question-answer pair. On straightforward domains this works reasonably well \cite{wang2023selfinstruct}. On legal contracts, however, the coupling proves brittle. Dense clause families, nested cross-references, and highly templated language create a large space of superficially similar but semantically distinct paths that must be navigated before generation even begins. Consequently, when we evaluate this joint operation on the Contract Understanding Atticus Dataset (CUAD) \cite{hendrycks2021cuad}, random multi-document chains pass a structured JSON quality gate only $22.0\%$ of the time, and the usable corpus saturates at approximately 18K examples regardless of generation budget \cite{hendrycks2021cuad}.

\Needspace{0.52\textheight}
\makeatletter
\@ifpackageloaded{lineno}{\nolinenumbers}{}
\makeatother
\begingroup
\setlength{\columnsep}{\FigFailureWrapSep}
\setlength{\belowcaptionskip}{-8pt}
\sloppy
\setlength{\emergencystretch}{3em}
\begin{wrapfigure}{r}{0.34\textwidth}
  \centering
  \includegraphics[width=\linewidth]{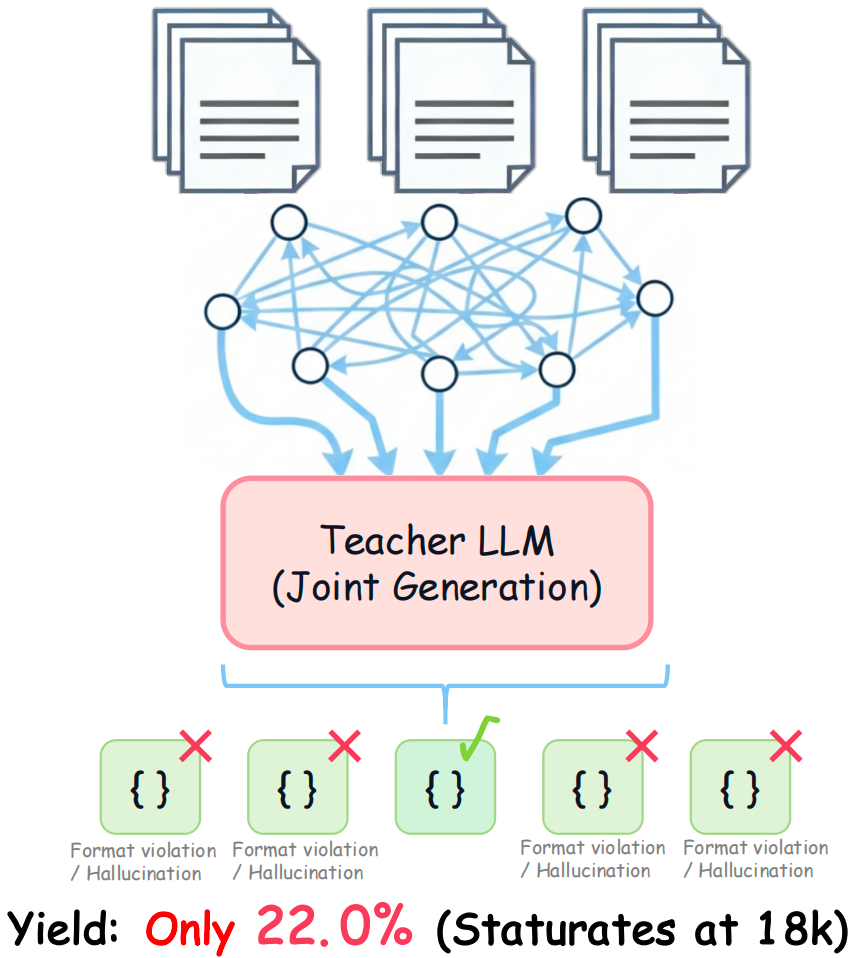}\par
  \captionsetup{font=small,labelfont={bf,small},skip=3pt,width=\linewidth,hypcap=false,margin=0pt}
  \caption{Illustration of the data yield bottleneck when coupling evidence discovery with language generation.}
  \label{fig:failure}
  \vspace{-10pt}
\end{wrapfigure}

We formalize this barrier as the \emph{cognitive synthesizability bottleneck}. Figure~\ref{fig:failure} illustrates this failure mode concretely. The teacher is being asked to do two things at once: search an implicit graph of possible evidence connections while composing fluent language. Each task individually is manageable, but together they exceed the model's reliable fusion capacity. \citet{dziri2023faith} show that auto-regressive Transformers tend to linearize multi-step compositional tasks into surface-level pattern matching, which makes joint graph search and text generation a particularly fragile combination. The underlying evidence paths themselves are often valid, so the bottleneck is mechanical rather than epistemic.

This observation suggests a straightforward solution: separate topology discovery from language generation. We propose \textbf{Graph-Constrained Semantic Chain Synthesis (GCSCS)}, a four-stage offline pipeline that operationalizes this decoupling. Instead of asking the teacher to reason about evidence relationships on the fly, GCSCS first atomizes each contract into standalone keyword-grounded QA facts. It then constructs contextual keyword centroids that capture each concept's semantic footprint, and searches a FAISS index over these centroids for non-looping, topologically admissible paths under five geometric constraints. The result is a pre-validated linear reasoning chain with verified cross-clause connectivity. Only then is the teacher invoked to compose a question and answer from this ready-made sequence.

\WFclear
\fussy
\endgroup
\makeatletter
\@ifpackageloaded{lineno}{\linenumbers}{}
\makeatother

Since the teacher is never asked to do more than read a pre-assembled evidence path and write a QA example around it, GCSCS replaces an overloaded joint operation with a single-pass articulation task. As a result, the quality-gate pass rate rises from $22.0\%$ to $94.5\%$, expanding the usable corpus 4.4 times to 80K examples \cite{hendrycks2021cuad}. Fine-tuning Qwen3-32B on this corpus improves closed-book Token F1 from $21.66\%$ to $38.58\%$ under a strict source-contract split \cite{hendrycks2021cuad}. An 18K matched-size ablation confirms that this gain tracks corpus scale rather than per-chain quality, establishing data yield as the primary bottleneck \cite{hendrycks2021cuad}. This is consistent with the data-quality literature showing that filter-defined yield matters as much as nominal generation volume \cite{zhou2023lima,chen2024alpagasus,chen2024dog}.

We further characterize the nature of this gain through zero-shot transfer to human-annotated benchmarks. On HotpotQA and MuSiQue, the improvement is statistically positive but small (approximately $+0.56\%$ on HotpotQA), indicating that the 80K gain primarily reflects legal answer-style adaptation rather than generalizable multi-hop reasoning, which bounds the scope of the contribution \cite{yang2018hotpotqa}. Additionally, We find that the model learns to paste citation IDs at the end of every answer as a format habit, not because it has located the relevant evidence in the source contract. When we inject retrieved context at inference time, this citation habit disappears entirely (CFR drops from $89\%$ to $8\%$). We call this the \emph{citation-as-suffix} behavior.

Our contributions are threefold:
\begin{itemize}

\item We formalize the \emph{cognitive synthesizability bottleneck},
a mechanical failure mode in which jointly discovering evidence
structure and verbalizing it over specialized corpora degrades data
yield regardless of model capacity, establishing data yield as the
primary scaling bottleneck in synthetic multi-hop~data~construction
for specialized domains.

\item We propose GCSCS, which resolves this bottleneck by separating
path enumeration from verbalization so that each operation stays
within the teacher's competence. A matched-size ablation establishes
that the gain comes from expanded data yield rather than per-chain
quality: constrained and unconstrained chains produce
indistinguishable downstream performance at equal~training~scale,
ruling out chain quality as a confound.

\item We show that fine-tuning on the resulting corpus improves
closed-book Token F1 from $21.66\%$ to $38.58\%$ under a strict
held-out split, and identify a \emph{citation-as-suffix} behavior
where the model appends citation tokens as a format habit rather
than grounding them in evidence, collapsing entirely when retrieved
context is injected at inference~time.

\end{itemize}

\section{Related Work}
\label{sec:related}

\subsection{Synthetic Data for Instruction Tuning}
Self-Instruct and Unnatural Instructions showed that teacher-generated instruction-response pairs can improve alignment \cite{wang2023selfinstruct,honovich2023unnatural}.
WizardLM increased instruction complexity through evolutionary rewriting \cite{xu2024wizardlm}.
RLHF-style and design-guided synthesis broadened the use of synthetic supervision \cite{ouyang2022training,guo2025synthetic,liu2026designer,zhu2026CHIMERA}.
These pipelines leave the evidence path implicit; we select it before teacher fusion.

\subsection{Multi-Hop Reasoning, Prompting, and Graph Supervision}
We use HotpotQA and MuSiQue as out-of-domain transfer benchmarks in Appendix~\ref{app:external}; WebQuestions, IIRC, and TriviaQA are related open-domain benchmarks included for context \cite{yang2018hotpotqa,trivedi2022musique,talmor2018web,ferguson2020iirc,joshi2017triviaqa}.
Inference-time methods (Chain-of-Thought, Tree-of-Thought, self-consistency, least-to-most, decomposed prompting, program-of-thought) improve reasoning without changing training data \cite{wei2022chain,yao2023tree,wang2023self,zhou2023least,khot2023decomposed,chen2023program}.
Our work changes the offline data instead.

\textbf{Path-before-generation graph synthesis.}
The paradigm of selecting evidence paths before generation is not new.
QA-GNN and GreaseLM show that graph structure can support question answering at inference time \cite{yasunaga2021qa,zhang2022greaselm}.
FactCG extracts context graphs from text for multi-hop fact checking and treats path enumeration as an upstream substrate for natural-language verification \cite{lei2025factcg}.
AIM-SciQA \cite{aimsciqa2026} (LREC~2026) constructs inter-document scientific QA via single-hop atomic extraction and semantic bridging across papers.
SciNets studies graph-constrained synthesis over scientific literature \cite{sciNets2026}.
Recent work additionally argues that synthetic multi-hop traces can improve multi-hop reasoning \cite{kabra2026learning}.
All four lines (a) treat evidence-path selection as an offline graph-search problem, (b) use a teacher LLM to verbalize a validated path into natural language, and (c) report a yield or quality benefit from the constraint.
Our methodological differences (contextual centroids in place of entity nodes, five-constraint admissibility, maximal-path pruning) are described in Section~\ref{sec:method}; domain generalization and threshold sensitivity are discussed in Appendix~\ref{app:limitations}.

LIMA shows that a small carefully-curated SFT set can match much larger noisy ones \cite{zhou2023lima}, AlpaGasus filters Self-Instruct-style data to improve downstream quality \cite{chen2024alpagasus}, and DoG argues for filtering-aware curation \cite{chen2024dog}.
Our observation sits in this space: on a high-boilerplate domain like CUAD, graph-constrained path search lets a teacher produce $4.4\times$ more samples that pass the same quality gate, and the downstream gain tracks corpus size rather than per-example chain quality.

\subsection{Retrieval, FAISS, and Citation Grounding}
Retrieval-Augmented Generation and dense passage retrieval retrieve evidence at inference time \cite{lewis2020rag,karpukhin2020dpr}, often using Sentence-BERT or SimCSE \cite{reimers2019sentence,gao2021simcse}.
Self-RAG and Active RAG add reflection or dynamic retrieval control \cite{asai2024selfrag,jiang2023active}, and hop-aware studies show that multi-step agentic retrieval remains brittle \cite{you2026agenticrag}.
FAISS \cite{johnson2019faiss} is commonly an inference-time vector index; we use it offline during data synthesis to propose conceptual edges.
Section~\ref{subsec:openbook} shows that a model trained in closed-book format loses citation behavior when retrieved context is injected at inference time.

\section{Methodology}
\label{sec:method}

The GCSCS pipeline (Figure~\ref{fig:pipeline}) has four stages: ontological atomization, contextual keyword centroid construction, constrained semantic chain building, and teacher fusion.
Appendix~\ref{app:method} provides implementation details on prompt, quality-gate failures, ID normalization, and threshold calibration.

\begin{figure}[tbp]
\centering
\includegraphics[width=\textwidth]{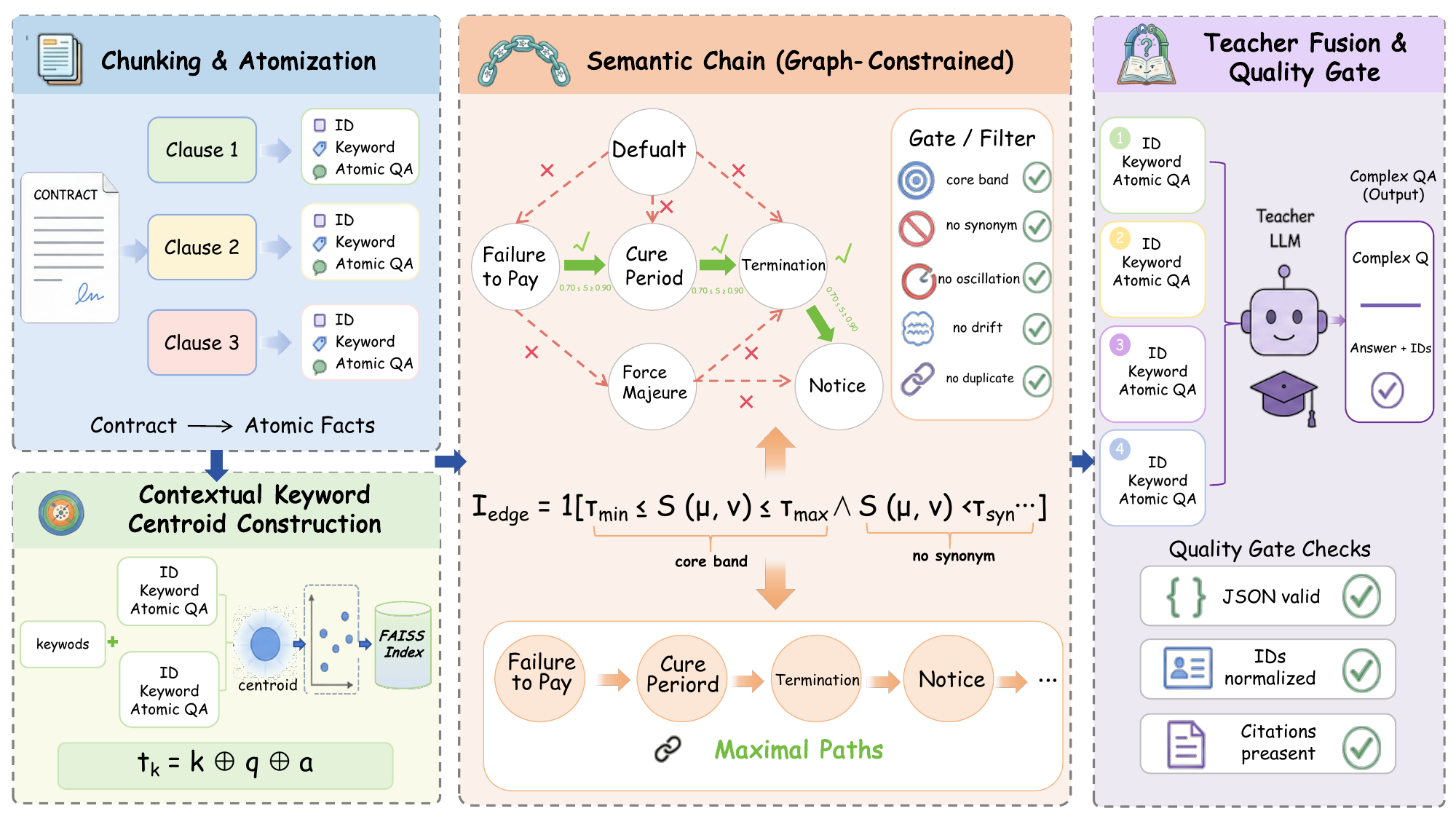}
\caption{Overview of the four-stage Graph-Constrained Semantic Chain Synthesis pipeline.}
\label{fig:pipeline}
\end{figure}

\subsection{Ontological Chunking and Atomization}
Given a contract $T$, the chunker identifies native legal boundaries (Article, Section, Clause, Paragraph, Schedule, Exhibit).
These boundaries reduce mid-clause truncation and preserve the local semantics of obligations, exceptions, and definitions.
If a document lacks explicit headings, sentence-boundary-aware splitting with a maximum block size of 1{,}200 characters is used instead.

For each chunk $c_i$, an extraction LLM $\mathcal{M}_{\text{ext}}$ at temperature 0.1 outputs a structured object $\mathcal{A}_i=\{k_j,q_j,a_j\}_{j=1}^{N_i}$ containing keywords and atomic QA pairs.
The extraction prompt forbids relative referents such as ``the preceding clause,'' so each fact stands alone without needing its surrounding context during later path construction.
Validated facts receive globally unique evidence identifiers that support citation tracking during teacher fusion and supplementary overlap audits.

\subsection{Contextual Keyword Centroids}
Once we have atomic QA pairs, the next challenge is connecting related but distinct facts across the contract. Direct QA-to-QA similarity tends to retrieve near-duplicate clauses from the same clause family and create semantic loops within a single contract.
To avoid this, we represent each keyword not by its surface string alone but by a contextual centroid that incorporates representative QA content.
For keyword $k$ appearing in $M$ QA contexts, we concatenate the keyword with at most two representative QA pairs:
\begin{equation}
t_k = k \oplus q_{k,1} \oplus a_{k,1} \oplus \cdots \oplus q_{k,K} \oplus a_{k,K}, \quad K=\min(2,M).
\end{equation}
The centroid vector is obtained by embedding and L2-normalizing this grounded text:
\begin{equation}
\hat{k}=\operatorname{Normalize}(\phi(t_k)),\quad \|\hat{k}\|_2=1.
\end{equation}
We instantiate $\phi$ with \texttt{text-embedding-v4}, which produces
1536-dimensional dense vectors. We index all centroids with FAISS \texttt{IndexFlatIP}, enabling exact nearest-neighbour search over all centroids. Because vectors are L2-normalized, inner product corresponds exactly to cosine similarity.

\subsection{Constrained Semantic Chain Building}
Let $G=(V,E)$ be a directed graph over contextual keyword centroids and let $S(\cdot,\cdot)$ denote cosine similarity in the centroid space.
For a partial path $P=(v_1,\ldots,v_{\mathrm{prev}},u)$, a candidate next node $v$ is admissible if and only if it passes all five semantic constraints and the lexical deduplication check.
We write the constraints as a piecewise function of the current depth $|P|$:
\begin{equation}
\tau_{\mathrm{prev}}^{*}(|P|) \;=\;
\begin{cases}
\text{undefined}, & |P|=1,\\[2pt]
0.75, & |P|\in\{2,3\},\\[2pt]
0.70, & |P|\geq 4,
\end{cases}
\qquad
\tau_{\mathrm{drift}}^{*}(|P|) \;=\;
\begin{cases}
\text{undefined}, & |P|=1,\\[2pt]
0.50, & |P|\geq 2.
\end{cases}
\end{equation}
The edge admissibility indicator is
\begin{equation}
\label{eq:edge}
\begin{aligned}
\mathbf{I}_{\mathrm{edge}}(u,v;P)
=\mathbf{1}\!\big[
&\,\tau_{\min}\leq S(u,v)\leq\tau_{\max}
\;\land\; S(u,v)<\tau_{\mathrm{syn}}\\
&\,\land\; \big(\tau_{\mathrm{prev}}^{*}(|P|)\text{ undef.}\;\lor\; S(v,v_{\mathrm{prev}})<\tau_{\mathrm{prev}}^{*}(|P|)\big)\\
&\,\land\; \big(\tau_{\mathrm{drift}}^{*}(|P|)\text{ undef.}\;\lor\; S(v,v_1)\geq\tau_{\mathrm{drift}}^{*}(|P|)\big)\\
&\,\land\; \lambda(v,P)=0
\,\big],
\end{aligned}
\end{equation}
\noindent where $\mathbf{1}[\cdot]$ is the indicator function,
$v_{\mathrm{prev}}$ is the node immediately preceding $u$ in $P$, and $v_1$ is the chain's origin. The static thresholds $(\tau_{\min},\tau_{\max})=(0.70,0.90)$ define the admissible cosine similarity band for each hop, and $\tau_{\mathrm{syn}}=0.95$ excludes near-synonyms. The depth-dependent threshold $\tau_{\mathrm{prev}}^{*}(|P|)$ is undefined at $|P|=1$, equals $0.75$ for $|P|\in\{2,3\}$, and $0.70$ for $|P|\geq 4$; it rejects $v$ if $S(v,v_{\mathrm{prev}})$ meets or exceeds this cutoff, preventing back-and-forth oscillation between adjacent hops. The threshold $\tau_{\mathrm{drift}}^{*}(|P|)$ is undefined at $|P|=1$ and equals $0.50$ for $|P|\geq 2$; it rejects $v$ if its similarity to the chain origin $v_1$ falls below $0.50$, preventing global semantic drift across the full chain. Finally, $\lambda(v,P)$ is a binary flag that excludes $v$ when its keyword label is lexically near-duplicate to any node already present in~$P$.

\paragraph{Constraint calibration.}
The interval $[\tau_{\min},\tau_{\max}]=[0.70,0.90]$ is chosen to
target edges that connect related but conceptually distinct concepts.
Under standard STS annotation, pairs scoring $3.5$--$4.0$ out of $5$
are considered related but not synonymous \cite{reimers2019sentence};
these pairs cluster within cosine similarity $[0.70,0.85]$ under
transformer encoders. We set $\tau_{\max}$ below $\tau_{\mathrm{syn}}=0.95$ to create a three-zone partition (weak association / admissible inferential steps / near-synonyms). While calibrated on CUAD development data, these values align with dense retrieval sweet spots~\cite{karpukhin2020dpr}. Theorem~\ref{thm:drift-necessity} and Theorem~\ref{thm:upper-bound} provide formal geometric justifications; empirical grounding for $\tau_{\max}$ is in Appendix~\ref{app:proof_upper}.

The piecewise $\tau_{\mathrm{prev}}^{*}$ tightens with depth ($0.75$ for the first two hops, $0.70$ at depth $\geq 4$) to enforce stricter anti-oscillation.
Lexical deduplication $\lambda(v,P)$ flags near-duplicate keyword labels via exact match, substring containment, character-overlap $>0.80$, or \texttt{difflib} sequence ratio $>0.60$.
FAISS retrieves $K=100$ candidates per node; the path search is a bounded best-first DFS with $B=5$, $L_{\max}=8$, and maximal-path pruning (Appendix~\ref{app:method}).

\noindent\textit{Assumption (density).}
Throughout this section, $V$ is assumed sufficiently dense in
$\mathbb{S}^{d-1}$ so that for the GCSCS constraint parameters
$(\tau_{\min}, \tau_{\max})$, a valid successor always exists at
each path extension step and no search branch terminates~prematurely.

\begin{definition}[Locally Constrained Semantic Chain]
\label{def:local-chain}
Let $G=(V,E)$ be a semantic graph where each contextual keyword centroid $v \in V$ is an $L_2$-normalized vector on $\mathbb{S}^{d-1}$ ($d \geq 2$). A semantic chain $P = (v_1, v_2, \dots, v_L)$ of length $L$ is \emph{locally constrained} with threshold $\tau_{\min}$ if $S(v_i, v_{i+1}) \geq \tau_{\min}$ for all $i \in \{1, \dots, L-1\}$, enforcing a minimum similarity between every pair of consecutive nodes.
\end{definition}

\begin{theorem}[Insufficiency of Local Semantic Constraints]
\label{thm:drift-necessity}
Let $P$ be a locally constrained semantic chain with $\tau_{\min} = \tfrac{7}{10}$ and $L \geq 3$. The local hop constraints are strictly insufficient to guarantee a global semantic anchoring of $S(v_1, v_L) \geq \tau_{\textup{drift}} = \tfrac{1}{2}$. Consequently, an explicit global drift constraint is geometrically necessary to prevent unbounded endpoint deviation.
\end{theorem}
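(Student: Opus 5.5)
The plan is to prove insufficiency by an explicit counterexample. For every $L\geq 3$ we build a locally constrained chain on $\mathbb{S}^{d-1}$ with $S(v_i,v_{i+1})\geq \tfrac{7}{10}$ for all consecutive pairs but $S(v_1,v_L)<\tfrac12$. Since $d\geq 2$, it suffices to work inside a fixed two-dimensional great circle, embedded in any higher $d$ by zero-padding. There, unit vectors are parametrized by angles and $S(v_i,v_j)=\cos(\theta_j-\theta_i)$. The Gram-matrix viewpoint mentioned in the abstract fits this directly. The Gram matrix $\Gamma_{ij}=\cos(\theta_j-\theta_i)$ is positive semidefinite of rank at most $2$, so it is realizable by unit vectors. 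This certifies that the local constraints impose no further hidden restriction on the endpoint entry.

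The key step is the geodesic construction. Set $\alpha=\arccos(7/10)\approx 45.57^\circ$ and $\theta_i=(i-1)\alpha$, so every consecutive hop meets $S(v_i,v_{i+1})=\tfrac{7}{10}$ exactly. For $L=3$ the endpoints are separated by $2\alpha\approx 91.15^\circ$, which gives
\[
S(v_1,v_3)=\cos 2\alpha = 2\cdot\tfrac{49}{100}-1=-\tfrac{1}{50}<\tfrac12 .
\]
This is the ${\sim}91^\circ$ endpoint drift quoted in the abstract. It is already a violation, and in fact a near-orthogonality.

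For general $L\geq 3$ we cannot simply keep walking along the circle, because the angle wraps around and the cosine eventually rises again. So we need a construction that handles every length uniformly. One option is to take the $L=3$ chain $(v_1,v_2,v_3)$ and then oscillate between $v_2$ and $v_3$, setting $v_4=v_2$, $v_5=v_3$, and so on. Every hop then still has similarity $\tfrac{7}{10}$, and the endpoint is either $v_3$ or $v_2$. The $v_2$ case fails, since $S(v_1,v_2)=0.7$. A cleaner option is to use the extra dimensions when $d\geq 3$. In the plane one can also choose the angular increments $\delta_i\in[-\alpha,\alpha]$ so that $\sum_i\delta_i$ lands in $(60^\circ,\alpha\cdot\min(L-1,\cdot)]$. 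For any $L\geq 3$, take $\delta_1=\delta_2=\alpha$, which gives cumulative angle $2\alpha\approx 91^\circ$. After that, alternate $\delta=-\alpha,+\alpha$ if an even number of hops remains. If an odd number remains, insert one hop of angle $0$, which has $S=1\geq 0.7$. Definition~\ref{def:local-chain} only imposes a lower bound, so repeated points are allowed. If distinct nodes are desired, replace the zero hop by small perturbations. This keeps the endpoint at angle $2\alpha$ or $\alpha$. I expect the bookkeeping to be cleanest if I simply make the final angle exactly $2\alpha$ with the parity fix just described.

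The main obstacle is this parity and wrap-around bookkeeping, since a single construction has to work for all $L$. The fallback is the trivial padding $v_4=\dots=v_L=v_3$, using hops of similarity $1$, which is legitimate under Definition~\ref{def:local-chain}. The final step restates the conclusion. Because the local constraints admit $S(v_1,v_L)=-\tfrac1{50}$, no condition on consecutive pairs implies $S(v_1,v_L)\geq\tfrac12$. The drift term in \eqref{eq:edge} is therefore logically independent of the others and must be imposed explicitly. Along the geodesic, endpoint deviation grows like $(L-1)\alpha$ until it wraps around, which is the sense of "unbounded".
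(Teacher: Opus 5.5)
Your argument is correct. It differs from the paper's proof mainly in how it handles $L>3$.

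\textbf{The paper's route.} It stays in Gram-matrix language throughout. For $L=3$ it fixes both adjacent entries at $\tfrac{7}{10}$, factors $\det\mathbf{G}=-\tfrac{1}{50}(x-1)(50x+1)$, and reads off the full realizable interval $S(v_1,v_3)\in[-\tfrac{1}{50},1]$. For $L>3$ it proves a one-step extension lemma: for any $v_k$ with $s=S(v_1,v_k)\le\tfrac12$, the lower root $y_{\min}(s)\le\tfrac{7s}{10}\le 0.35$ of $\det\mathbf{H}(y)$ gives a realizable $v_{k+1}$ with $S(v_k,v_{k+1})=\tfrac{7}{10}$ and $S(v_1,v_{k+1})<\tfrac12$. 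It then inducts.

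\textbf{Your route.} Your base case is the same configuration seen geometrically. $\det\mathbf{G}=0$ means rank $2$, i.e.\ coplanar vectors, which is exactly your geodesic with $\cos 2\alpha=-\tfrac{1}{50}$. For longer chains you replace the induction with explicit planar coordinates (padding with $v_3$, or $\pm\alpha$ oscillation with a zero hop for parity). Realizability is then immediate and no determinants are needed.

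\textbf{What the paper's route buys.} First, it shows that $-\tfrac{1}{50}$ is the extremal two-hop value, not merely an attained one. Second, every hop in its chain sits at exactly $\tfrac{7}{10}$. So the same counterexample also respects the pipeline's upper bound $\tau_{\max}=0.90$, which shows that the whole admissible band, not just its lower edge, fails to anchor the endpoint. Your similarity-$1$ hops are legitimate under Definition~\ref{def:local-chain}, which imposes only a lower bound, but they would not survive $\tau_{\max}$. If you want that stronger reading, after the two $\alpha$-hops alternate $+30^\circ,-30^\circ$ hops (similarity $\cos 30^\circ\approx 0.87$). Add one extra $+30^\circ$ hop when $L-3$ is odd. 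This keeps every cumulative angle in $[2\alpha,2\alpha+60^\circ]\subset(60^\circ,300^\circ)$.

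\textbf{Write-up.} Commit to a single construction. Drop the unfinished extra-dimension option and the garbled target interval. Also note that an endpoint at angle $\alpha$ has similarity $0.7$, which is not a violation, so the parity fix must land at $2\alpha$.
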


\begin{proof}[Proof Sketch]
We provide a constructive proof via Gram matrix realizability
(full derivations in Appendix~\ref{app:proof_drift}).
For $L=3$, we construct the $3\times 3$ Gram matrix of $(v_1, v_2, v_3)$ with adjacent similarities fixed at $S = \tfrac{7}{10}$ and unknown $x = S(v_1, v_3)$. Sylvester's criterion for positive semi-definiteness yields the realizable domain $x \in \bigl[-\tfrac{1}{50}, 1\bigr]$. Since $-\tfrac{1}{50} < \tfrac{1}{2}$, the configuration $S(v_1, v_3) = -\tfrac{1}{50}$ is geometrically valid yet violates the global target. For $L > 3$, the local constraint is purely Markovian and imposes no geometric restoring force toward $v_1$. A second Gram matrix argument shows that for any $v_k$ with $S(v_1,v_k) \leq \tfrac{1}{2}$, there always exists a valid $v_{k+1}$ satisfying the local constraint while maintaining $S(v_1, v_{k+1}) < \tfrac{1}{2}$; the key bound follows from $\tfrac{7s}{10} \leq 0.35 < \tfrac{1}{2}$ for~$s \leq \tfrac{1}{2}$.
\end{proof}

\begin{remark}
Over just two hops ($L=3$) with $\tau_{\min}=0.70$, the end-to-end
similarity can reach $-\tfrac{1}{50}$ (${\approx}91.1^{\circ}$ drift), demonstrating that global semantic anchoring is not implied by any finite set of local hop constraints, however tight those constraints are~set.
\end{remark}

\begin{theorem}[Necessity of the Upper Bound for Escaping Dense Semantic Cliques]
\label{thm:upper-bound}
Let $G=(V,E)$ be a semantic graph where each contextual keyword centroid $v \in V$ is an $L_2$-normalized vector on $\mathbb{S}^{d-1}$. Define a dense boilerplate clique as a spherical cap $C(c,\epsilon) = \{v \in \mathbb{S}^{d-1} : \arccos(S(v,c)) \le \epsilon\}$ with angular radius $\epsilon \in (0, \pi/2)$ around a centre $c$.

\textbf{(i) Insufficiency of the lower bound.} If the edge admissibility condition only requires $S(v_i,v_{i+1}) \ge \tau_{\min}$, then for any clique satisfying $\cos(2\epsilon) \ge \tau_{\min}$, every pair $(u,v) \in C(c,\epsilon)\times C(c,\epsilon)$ satisfies the condition, permitting paths of arbitrary length to remain within $C(c,\epsilon)$.

\textbf{(ii) Sufficiency of the upper bound.} Under the additional
constraint $S(v_i,v_{i+1}) \le \tau_{\max}$, if $\epsilon < \frac{1}{2}\arccos(\tau_{\max})$, then for any $v_i \in C(c,\epsilon)$, the admissibility condition eliminates every $u \in C(c,\epsilon)$ as a candidate, forcing $v_{i+1} \notin C(c,\epsilon)$ and guaranteeing escape from the dense~clique.
\end{theorem}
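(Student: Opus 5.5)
Both parts rest on one geometric fact: for any two points $u,v$ of the cap $C(c,\epsilon)$, the angle between them is at most $2\epsilon$. I would establish this first and then read off (i) and (ii) from the monotonicity of cosine.

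\textbf{Step 1 (angular triangle inequality).} The geodesic distance $\theta(x,y)=\arccos(S(x,y))$ is a metric on $\mathbb{S}^{d-1}$. I would invoke this as standard. If a self-contained argument is wanted, it can be reduced to positive semidefiniteness of the $3\times 3$ Gram matrix of $(x,y,z)$, in the spirit of the proof of Theorem~\ref{thm:drift-necessity}. Writing $\alpha=\theta(x,z)$ and $\beta=\theta(z,y)$, that condition forces
\[
S(x,y)\ \ge\ \cos\alpha\cos\beta-\sin\alpha\sin\beta=\cos(\alpha+\beta)
\]
whenever $\alpha+\beta\le\pi$. Applying this with $z=c$ and $u,v\in C(c,\epsilon)$ gives $\theta(u,v)\le\theta(u,c)+\theta(c,v)\le 2\epsilon$. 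The hypothesis $\epsilon<\pi/2$ gives $2\epsilon<\pi$, so we stay in the range where $\cos$ is strictly decreasing. Hence
\[
S(u,v)\ \ge\ \cos(2\epsilon)\qquad\text{for all } u,v\in C(c,\epsilon).
\]
I expect this step to be the only real obstacle. The care needed is the range restriction $\alpha+\beta\le\pi$ in the Gram-matrix bound, and making sure the correct branch of $\arccos$ is used.

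\textbf{Step 2 (part (i)).} Combining Step~1 with $\cos(2\epsilon)\ge\tau_{\min}$ gives $S(u,v)\ge\tau_{\min}$ for every pair in the cap. So every ordered pair of cap points is an admissible edge under the lower bound alone. The cap has nonempty interior when $d\ge 2$ and $\epsilon>0$, so it contains infinitely many points. Under the density assumption, $V\cap C(c,\epsilon)$ supplies arbitrarily many distinct nodes. Any sequence of such nodes is then a locally constrained chain (Definition~\ref{def:local-chain}) that never leaves the cap, and its length is unbounded.

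\textbf{Step 3 (part (ii)).} Assume $\epsilon<\tfrac12\arccos(\tau_{\max})$. Then $2\epsilon<\arccos(\tau_{\max})\le\pi$. By Step~1 and strict monotonicity of $\cos$ on $[0,\pi]$, every $v_i,u\in C(c,\epsilon)$ satisfy
\[
S(v_i,u)\ \ge\ \cos(2\epsilon)\ >\ \tau_{\max}.
\]
This violates $S(v_i,v_{i+1})\le\tau_{\max}$, so no cap point, including $v_i$ itself, is an admissible successor. Therefore any admissible $v_{i+1}$ lies outside $C(c,\epsilon)$. The density assumption of this section guarantees that such a successor exists, which gives escape from the clique rather than just termination of the path.
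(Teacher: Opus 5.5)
Your proposal is correct and follows the paper's proof essentially step for step: you bound $\theta(u,v)\le 2\epsilon$ inside the cap via the triangle inequality through $c$, convert this to $S(u,v)\ge\cos(2\epsilon)$ using monotonicity of $\cos$ on $[0,\pi]$, and read off (i) and (ii) directly. Your justification of the triangle inequality (angular distance is a metric, or alternatively the Gram-determinant bound $S(x,y)\ge\cos(\alpha+\beta)$) is, if anything, cleaner than the paper's appeal to geodesic convexity of the cap, which the inequality does not actually need.
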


\begin{proof}[Proof Sketch]
Follows from the spherical triangle inequality $\theta(u,v)\le 2\epsilon$ and monotonicity of $\cos(\cdot)$; full derivation and empirical verification in Appendix~\ref{app:proof_upper}.
\end{proof}

\begin{remark}
The GCSCS pipeline sets $\tau_{\max}=0.90$, enforcing a minimum angular separation of $\arccos(0.90)\approx 25.84^{\circ}$ between consecutive hops. Theorem~\ref{thm:upper-bound} guarantees escape from any boilerplate clique with $\epsilon < 12.92^{\circ}$. Appendix~\ref{app:proof_upper} provides empirical verification that $87.8\%$ of the $5{,}313$ DBSCAN-identified boilerplate clusters in CUAD satisfy this condition; the remaining $12.2\%$ are covered by the $\tau_{\mathrm{syn}}=0.95$ filter. Together, Theorem~\ref{thm:drift-necessity} and Theorem~\ref{thm:upper-bound} provide a closed geometric justification for the $[\tau_{\min},\tau_{\max}]$ band with each bound addressing a distinct failure~mode.
\end{remark}

\subsection{Teacher Fusion and Quality Gate}
Each accepted path is serialized into evidence blocks.
Every supporting fact is prepended by its global ID, and each node contributes at most three evidence items to control context length.
A generation LLM $\mathcal{M}_{\text{gen}}$ at temperature 0.2 writes a complex question and answer citing the evidence IDs.
Citation IDs serve as traceability markers that link each answer back to its source evidence items \cite{min2023factscore,manakul2023selfcheckgpt,gekhman2023trueteacher}. Whether a cited ID actually entails the answer is not verified at this stage; that check is left for downstream evaluation.
The quality gate $G_{\mathrm{gate}}(\cdot)$ comprises structured JSON validation, up to three retries, and ID normalization.
For a candidate path set $\mathcal{P}$, synthesis yield is
\begin{equation}
\mathrm{Yield}(\mathcal{P})
=
\frac{1}{|\mathcal{P}|}
\sum_{P_i\in\mathcal{P}}
\mathbf{1}\!\left[
G_{\mathrm{gate}}(\mathcal{M}_{\mathrm{gen}}(P_i))=1
\right].
\end{equation}
All pass rates in Section~\ref{sec:ablation} are measured after this gate.

\section{Experimental Setup}
\label{sec:exp}

\textbf{Corpus and model.}
We use the Contract Understanding Atticus Dataset (CUAD) \cite{hendrycks2021cuad}, whose contracts contain repeated clause families, defined terms, cross-references, and exceptions.
These properties suit multi-hop synthesis but also create a leakage risk because similar clauses recur across contracts.
The base model is Qwen3-32B; the synthesis pipeline is instantiated with DeepSeek (DS) and Gemini (GE) teachers to test whether gains persist across teacher distributions.

\textbf{Fine-tuning protocol.}
All SFT runs use LoRA \cite{hu2022lora,dettmers2023qlora}.
Unless stated otherwise, the LoRA rank is 8, alpha is 16, dropout is 0.05, target modules are all linear modules, learning rate is $1\times10^{-5}$, global batch size is 64, maximum sequence length is 4096, and training lasts one epoch.
Existing adapter weights are discarded before each run.
Appendix~\ref{app:setup} reports compute and reproducibility details.

\textbf{Metrics.}
Token F1 measures lexical overlap with the gold answer.
Citation Format Rate (CFR) measures adherence to the required
citation syntax. Evidence Recall (ER, diagnostic) measures the fraction of gold evidence IDs that the model explicitly cites.
Token F1 can reward answer length and teacher style; neither CFR nor ER alone proves citation faithfulness or genuine evidence grounding.
Appendix~\ref{app:metrics} gives full definitions and recommended
audits.

\textbf{Source-contract split.}
We split the 510 source contracts into 357 train / 51 development
/ 102 test (seed 42) before chunking, so training and held-out
chains are built from entirely independent source documents.
This removes exact source-contract contamination; template-level
near-duplicates across CUAD contracts remain a known risk.
Appendix~\ref{app:split} reports the data funnel and an ID-gap sanity audit; the GE and DS test sets contain 21{,}186 and
19{,}238~examples.

\section{Results}
\label{sec:results}

\subsection{Closed-Book CUAD-Style QA}
\label{subsec:main}

Table~\ref{tab:main} reports the in-pipeline closed-book results under a source-contract split.
DS-SFT improves DS-test Token F1 from $18.99\%$ to $37.19\%$, GE-SFT improves GE-test Token F1 from $21.66\%$ to $38.58\%$, and CFR rises from $51$--$54\%$ to $86$--$89\%$.
ER stays below $3\%$ and does not improve with training.
This combination is informative: the student learns to produce the citation template (emit \texttt{ID\_xxx} tokens at the end of an answer) but does not learn the underlying gold evidence-ID set.
We expand on this observation in Section~\ref{subsec:openbook}, where we show that the learned citation behavior collapses entirely under open-book retrieved context.

\begin{table}[tbp]
\centering
\caption{Closed-book multi-hop QA performance. ER is evidence recall against gold evidence IDs and is diagnostic only; higher is better. CFR ($\uparrow$) measures citation-format adherence.}
\label{tab:main}
\begin{tabular}{llccc}
\toprule
Model & Test Set & Token F1 & ER ($\uparrow$, diag.) & CFR ($\uparrow$) \\
\midrule
Qwen3-32B-Base & DS & 18.99\% & 0.40\% & 50.54\% \\
Qwen3-32B-DS-SFT & DS & 37.19\% & 0.36\% & 85.99\% \\
Qwen3-32B-Base & GE & 21.66\% & 2.74\% & 53.98\% \\
Qwen3-32B-Gemini-SFT & GE & 38.58\% & 2.34\% & 89.04\% \\
\bottomrule
\end{tabular}
\end{table}

Stratifying by chain length (Appendix~\ref{app:chainlen}) shows the gain holds across $\leq$$3$ / $4$--$5$ / $\geq$$6$ hop buckets.
The slightly larger $\geq$$6$-hop improvement is consistent with the hypothesis that multi-hop supervision helps on longer chains, but it does not by itself establish that the model has learned compositional long-chain reasoning as opposed to longer answer templates.

\subsection{Matched-Size Ablation}
\label{sec:ablation}

Table~\ref{tab:ablation} presents the central ablation.
Random chain sampling passes the teacher quality gate at $22.0\%$ and saturates at $\sim$18K usable examples. Semantic chain construction passes at $94.5\%$. When the semantic corpus is subsampled to the same 18K scale, Matched Semantic-SFT obtains $22.66\%$ Token F1 and Random-SFT obtains $22.51\%$.
We bold these two rows because they carry the causal weight of the ablation: at matched scale, Random and Matched Semantic are indistinguishable. The $0.15\%$ gap is a single-seed result lacking confidence intervals; we treat it as an indicative tie.

Given the indicative tie at 18K, the $16.92\%$ gain at 80K comes
from corpus scale, not from intrinsically better chains.
The graph constraints make each path easier for the teacher to
verbalize, which is why the pass rate rises from $22.0\%$ to
$94.5\%$ and the usable corpus grows $4.4\times$ without changing
the underlying generation model. This is consistent with the data-quality literature showing that filter-defined yield matters as much as nominal generation volume~\cite{zhou2023lima,chen2024alpagasus,chen2024dog}.

\begin{table}[tbp]
\centering
\caption{Matched-size ablation on the GE test set. \textbf{Bold rows} are the 18K matched-scale comparison (single seed, no CI). $\Delta$pp is relative to Random-SFT at 18K. ER is diagnostic-only.}
\label{tab:ablation}
\resizebox{\textwidth}{!}{%
\begin{tabular}{lcccccc}
\toprule
Configuration & Train Size & Pass Rate & Token F1 & $\Delta$pp & ER ($\uparrow$, diag.) & CFR ($\uparrow$) \\
\midrule
Base (no SFT) & --- & --- & 21.66\% & $-0.85$ & 2.74\% & 53.98\% \\
\textbf{Random-SFT} & \textbf{18K} & \textbf{22.0\%} & \textbf{22.51\%} & \textbf{0.00} & \textbf{2.69\%} & \textbf{60.59\%} \\
\textbf{Matched Semantic-SFT} & \textbf{18K} & \textbf{94.5\%} & \textbf{22.66\%} & \textbf{$+0.15^{*}$} & \textbf{2.70\%} & \textbf{63.55\%} \\
Semantic-SFT & 80K & 94.5\% & 38.58\% & $+16.07$ & 2.34\% & 89.04\% \\
\bottomrule
\end{tabular}}
\\[2pt]
{\footnotesize $^{*}$Single training seed; no multi-seed std or bootstrap CI computed. Treated as an indicative tie, not a directional claim.}
\end{table}

\FloatBarrier
\subsection{Data Volume Scaling}
\label{sec:scaling}

Figure~\ref{fig:scaling_combined} varies only the number of semantic-chain training examples on the GE closed-book test set.

\begin{figure}[htbp]
    \centering
    \includegraphics[width=\linewidth,height=0.50\textheight,keepaspectratio]{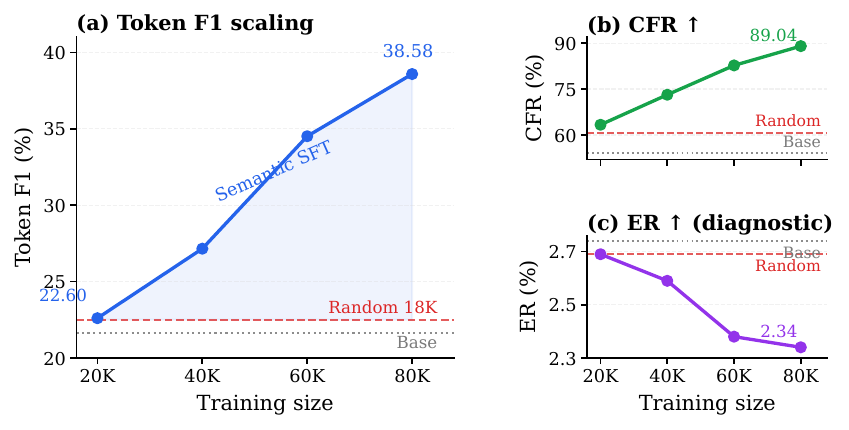}
    \caption{Data scaling analysis on the GE closed-book test set. Left: Token F1 vs.\ semantic-chain SFT training size, with the Random-SFT 18K and Base model as references. Right: citation format rate (CFR, higher is better) and evidence recall (ER, diagnostic, higher is better).}
    \label{fig:scaling_combined}
\end{figure}

Token F1 improves monotonically from $22.60\%$ at 20K to $38.58\%$ at 80K.
CFR rises from $63.33\%$ to $89.04\%$, while ER slightly decreases from $2.69\%$ to $2.34\%$.
The 20K semantic result ($22.60\%$) is nearly identical to the 18K random result ($22.51\%$), confirming that the mechanism is scale rather than per-example quality.
The monotonic rise in CFR alongside flat (or declining) ER is a diagnostic pattern: scaling teaches the model to produce better-formatted answers with citation tokens in the correct syntactic positions, but does not teach it to select the correct evidence IDs.
This separation between format learning and evidence learning persists across the entire 20K--80K range.

\subsection{Cross-Teacher Evaluation}
\label{subsec:cross}

Table~\ref{tab:cross} evaluates each SFT model on the other
teacher's held-out test set. GE-SFT reaches $39.28\%$ on DS-test and DS-SFT reaches $38.27\%$ on GE-test, preserving most in-domain gains across teacher styles and weakening a single-teacher mimicry explanation for the observed improvements.

\begingroup
\setlength{\intextsep}{10pt plus 2pt minus 2pt}
\begin{table}[tbp]
\centering
\caption{Cross-teacher generalization analysis.}
\label{tab:cross}
\begin{tabular}{lcc}
\toprule
Condition & Token F1 & vs Base $\Delta$pp \\
\midrule
Base on DS-test & 18.99\% & --- \\
DS-SFT on DS-test (in-domain) & 37.19\% & +18.20 \\
GE-SFT on DS-test (cross-A) & 39.28\% & +20.29 \\
Base on GE-test & 21.66\% & --- \\
GE-SFT on GE-test (in-domain) & 38.58\% & +16.92 \\
DS-SFT on GE-test (cross-B) & 38.27\% & +16.61 \\
\bottomrule
\end{tabular}
\end{table}
\endgroup

\subsection{Open-Book Distribution Shift}
\label{subsec:openbook}

We test whether the closed-book citation behavior survives a
distribution shift by injecting dense retrieved CUAD clauses into
the inference-time prompt while keeping weights and questions identical. CFR collapses from $85.99$--$89.04\%$ to $3.42$--$8.45\%$
(Table~\ref{tab:openbook_shift}), ER moves by less than $0.2\%$,
and Token F1 falls by $10$--$13\%$. Prompt-slot placement (user vs.\ system) changes nothing material. We interpret this as \emph{citation-as-suffix} behavior: at training time the answer always ends with \texttt{ID\_xxx} tokens, so the student raises their probability under the closed-book distribution. When a long retrieved passage is added to the prompt, the input distribution shifts enough that the model stops emitting citation IDs altogether, even though its answer quality does not drop proportionally. True grounding should remain stable or improve when evidence appears in context, but CFR collapses instead, bounding deployment value for legal RAG~applications.

\begin{table}[tbp]
\centering
\caption{Open-book distribution shift. Same weights and questions; only a long retrieved-context block is added. CFR collapse with ER unchanged is the citation-as-suffix signature.}
\label{tab:openbook_shift}
\begin{tabular}{llccc}
\toprule
Model & Setting & Token F1 & ER ($\uparrow$, diag.) & CFR ($\uparrow$) \\
\midrule
DS-SFT & Closed-book & 37.19\% & 0.36\% & 85.99\% \\
DS-SFT & Open-book (user) & 23.75\% & 0.34\% & 3.42\% \\
DS-SFT & Open-book (system) & 23.92\% & 0.35\% & 3.70\% \\
GE-SFT & Closed-book & 38.58\% & 2.34\% & 89.04\% \\
GE-SFT & Open-book (user) & 28.35\% & 2.50\% & 8.25\% \\
GE-SFT & Open-book (system) & 28.50\% & 2.54\% & 8.45\% \\
\bottomrule
\end{tabular}
\end{table}

We test whether mixed-format training can restore citation reliability. Augmenting the 80K closed-book corpus with $20$K open-book samples ($5\!:\!3\!:\!2$ Clean / Partial-Noise / Incomplete) raises open-book Token F1 to $45.55\%$ and closed-book Token F1 to $42.72\%$, but open-book CFR only recovers to $20.57\%$, far below the $95.62\%$ closed-book CFR of the same model (Table~\ref{tab:mixed}). This confirms that switching citation behavior from suffix-style formatting to genuine evidence grounding requires a fundamentally different training signal (see Appendix~\ref{app:limitations}).

\begin{table}[tbp]
\centering
\caption{Mixed-format SFT does not restore citation reliability. The 80K closed-book corpus is augmented with 20K open-book samples (Clean:Partial-Noise:Incomplete $=5\!:\!3\!:\!2$). GE-SFT reference rows are reported in Table~\ref{tab:openbook_shift}. Open-book CFR rises from $8.25\%$ to $20.57\%$, far below the $95.62\%$ closed-book CFR and below any threshold suitable for legal RAG deployment.}
\label{tab:mixed}
\begin{tabular}{llccc}
\toprule
Model & Setting & Token F1 & ER ($\uparrow$, diag.) & CFR ($\uparrow$) \\
\midrule
Mixed-SFT  & Closed-book    & 42.72\% & 2.41\% & 95.62\% \\
Mixed-SFT  & Open-book      & 45.55\% & 2.45\% & 20.57\% \\
\bottomrule
\end{tabular}
\end{table}

A LegalBench subset check (Appendix~\ref{app:legalbench}) shows GE-SFT and DS-SFT within $1\%$ of the Qwen3-32B-Base average ($91.71\%$ and $90.86\%$ vs.\ $91.90\%$), ruling out catastrophic forgetting on legal reasoning tasks in this subset.
Broad legal-reasoning transfer is not claimed.

\section{Discussion}
\label{sec:discussion}
Our findings indicate a data-yield mechanism rather than a per-chain quality mechanism: constrained path search improves teacher synthesizability, so downstream gains primarily come from having more usable training examples. A separate behavioral diagnosis is that citation formatting can improve without evidence grounding, indicating template adherence under the closed-book training schema rather than robust citation use. We therefore frame citation-as-suffix as an independent limitation signal rather than a marker of grounded reasoning quality.

The evidence is scoped to one base model (Qwen3-32B + LoRA), one teacher per axis, CUAD-tuned thresholds, and one fusion template; a high pass rate marks comfortable teacher fusion, not optimality of hop necessity. Cross-teacher results (Section~\ref{subsec:cross}) and external benchmarks (Appendix~\ref{app:external}) bound style adaptation; the system is an offline data-scaling pipeline for closed-book legal QA, not a legal assistant or citation-grounded RAG (Appendix~\ref{app:broader}).
Zero-shot transfer to HotpotQA and MuSiQue confirms gains do not generalize beyond CUAD (Appendix~\ref{app:external}; $+0.56\%$ on HotpotQA).

\section{Conclusion}
\label{sec:conclusion}

GCSCS decouples path selection from text generation and raises the CUAD teacher pass rate from $22.0\%$ to $94.5\%$, expanding the usable corpus $4.4\times$ to $80$K examples and improving closed-book Token F1 by ${\sim}16$--$18\%$. A matched-size ablation confirms the gain reflects corpus scale rather than per-chain quality. External transfer to human-annotated benchmarks is small ($+0.56\%$ on HotpotQA; Appendix~\ref{app:external}), diagnostic evidence recall stays flat across the scaling sweep, and citation format collapses under retrieved context. Together these diagnostics bound the contribution to closed-book legal-style and citation-template adaptation rather than general multi-hop reasoning;
broader domains, retrieval backends, and multi-seed validation remain future work. For reproducibility, the synthesis codebase and constructed 80K dataset will be open-sourced upon publication.

\bibliographystyle{plainnat}
\bibliography{references}

\appendix

\section{Proof of Theorem~\ref{thm:drift-necessity}}
\label{app:proof_drift}

We provide a constructive proof for each $L \geq 3$.

\textbf{Base case} ($L=3$). Consider the $3\times 3$ Gram matrix of
$(v_1, v_2, v_3)$:
\[
\mathbf{G} = \begin{pmatrix}
1 & \tfrac{7}{10} & x \\[4pt]
\tfrac{7}{10} & 1 & \tfrac{7}{10} \\[4pt]
x & \tfrac{7}{10} & 1
\end{pmatrix}, \qquad x = S(v_1, v_3).
\]
The configuration is realizable in $\mathbb{R}^d$ ($d \geq 2$) if and only if
$\mathbf{G} \succeq 0$. By Sylvester's criterion, the $1\times 1$ minor
equals $1>0$ and the $2\times 2$ minor equals $\tfrac{51}{100}>0$, both
independent of $x$; the sole binding condition is therefore
$\det(\mathbf{G}) \geq 0$. Computing via cofactor expansion:
\[
\det(\mathbf{G}) = -x^2 + \tfrac{49}{50}x + \tfrac{1}{50}
= -\tfrac{1}{50}(x-1)(50x+1).
\]
Setting $\det(\mathbf{G}) \geq 0$ gives the realizable domain
$x \in \bigl[-\tfrac{1}{50}, 1\bigr]$.
Since $-\tfrac{1}{50} < \tfrac{1}{2}$, the configuration
$S(v_1,v_3) = -\tfrac{1}{50}$ is geometrically valid
(the Gram matrix has rank~$2$, confirming realizability for all $d\geq 2$)
while violating the global target, despite every adjacent pair satisfying
$S(v_i,v_{i+1}) = \tfrac{7}{10}$.

\textbf{Extension to all $L > 3$}.
The local hop constraint $S(v_k, v_{k+1}) \geq \tfrac{7}{10}$
references only adjacent nodes and imposes no constraint coupling $v_{k+1}$
to $v_1$.
We show that for any node $v_k$ with $S(v_1, v_k) = s \leq \tfrac{1}{2}$,
there exists a geometrically valid $v_{k+1}$ satisfying
$S(v_k, v_{k+1}) = \tfrac{7}{10}$ and $S(v_1, v_{k+1}) < \tfrac{1}{2}$.

Consider the Gram matrix of the triple $(v_1, v_k, v_{k+1})$:
\[
\mathbf{H}(y) = \begin{pmatrix}
1 & s & y \\[4pt] s & 1 & \tfrac{7}{10} \\[4pt] y & \tfrac{7}{10} & 1
\end{pmatrix}, \qquad y = S(v_1, v_{k+1}).
\]
Computing $\det(\mathbf{H}) = -y^2 + \tfrac{7s}{5}y + \tfrac{51}{100}(1-s^2)$,
the realizable domain for $y$ is:
\[
y \in \left[\tfrac{7s}{10} - \tfrac{\sqrt{51(1-s^2)}}{10},\;
            \tfrac{7s}{10} + \tfrac{\sqrt{51(1-s^2)}}{10}\right].
\]
For any $s \leq \tfrac{1}{2}$, the linear term satisfies
$\tfrac{7s}{10} \leq \tfrac{7}{20} = 0.35$.
Since $\tfrac{\sqrt{51(1-s^2)}}{10} \geq 0$, the lower bound satisfies:
\[
y_{\min}(s) = \tfrac{7s}{10} - \tfrac{\sqrt{51(1-s^2)}}{10}
\leq \tfrac{7s}{10} \leq 0.35 < \tfrac{1}{2}.
\]
Thus, setting $y = y_{\min}(s)$ always yields a geometrically valid
$v_{k+1}$ that maintains the violation.
Therefore, for every $L \geq 3$ there exists a valid chain with
$S(v_1, v_L) < \tfrac{1}{2} = \tau_{\textup{drift}}$, and an explicit
global constraint $S(v_1, v_L) \geq \tau_{\textup{drift}}$ is a strict
geometric necessity. $\blacksquare$

\section{Proof of Theorem~\ref{thm:upper-bound} and Empirical Verification}
\label{app:proof_upper}

\subsection*{Proof}
Let $\theta(u,v)=\arccos(S(u,v))$ denote the geodesic distance on $\mathbb{S}^{d-1}$. Since $\epsilon\in(0,\pi/2)$, the spherical cap $C(c,\epsilon)$ is geodesically convex: any two points in the cap lie in an open hemisphere centred at $c$, and the length-minimising geodesic between them remains within the cap. The spherical triangle inequality therefore applies.

\textbf{Pairwise similarity lower bound inside the clique.} For any $u,v\in C(c,\epsilon)$:
\[  \theta(u,v)\;\le\;\theta(u,c)+\theta(c,v)\;\le\;2\epsilon.\]
Since $\cos(\cdot)$ is strictly decreasing on $[0,\pi]$:
\[  S(u,v)=\cos(\theta(u,v))\;\ge\;\cos(2\epsilon).\tag{$*$}\]

\textbf{Part~(i).} If $\cos(2\epsilon)\ge\tau_{\min}$, then by~$(*)$ every pair $(u,v)\in C(c,\epsilon)\times C(c,\epsilon)$ satisfies $S(u,v)\ge\tau_{\min}$. A search constrained only from below can traverse an arbitrarily long chain without leaving $C(c,\epsilon)$.

\textbf{Part~(ii).} Suppose $\epsilon<\frac{1}{2}\arccos(\tau_{\max})$, i.e. $2\epsilon<\arccos(\tau_{\max})$. Since $\cos(\cdot)$ is strictly decreasing: $\cos(2\epsilon)>\tau_{\max}$. By~$(*)$, for any $v_i\in C(c,\epsilon)$ and any $u\in C(c,\epsilon)$:
\[  S(v_i,u)\;\ge\;\cos(2\epsilon)\;>\;\tau_{\max}.\]
The upper-bound constraint $S(v_i,v_{i+1})\le\tau_{\max}$ therefore eliminates every $u\in C(c,\epsilon)$. The algorithm must select $v_{i+1}$ with $\theta(v_i,v_{i+1})\ge\arccos(\tau_{\max})>2\epsilon$, placing $v_{i+1}$ strictly outside $C(c,\epsilon)$. \hfill$\blacksquare$

\subsection*{Empirical Verification of Boilerplate Clique Radii}
To verify the geometric premise of Theorem~\ref{thm:upper-bound}, we applied DBSCAN ($\texttt{eps}=0.05$ cosine distance, $\texttt{min\_samples}=3$) to the $45{,}551$ $L_2$-normalized keyword centroids extracted from CUAD training contracts.\footnote{%
  $850$ centroids were excluded due to API retry failures during the re-embedding run; the analysis covers $98.2\%$ of the $46{,}401$ centroids reported in Table~\ref{tab:data_funnel}.}
The threshold $\texttt{eps}=0.05$ corresponds to a similarity floor of $0.95$, deliberately aligned with the $\tau_{\mathrm{syn}}=0.95$ synonym-exclusion boundary in Eq.~\eqref{eq:edge}.

DBSCAN identified $5{,}313$ dense boilerplate clusters. Of the $45{,}551$ centroids, $55.0\%$ were classified as noise points (i.e., isolated keywords not belonging to any dense cluster), confirming that the majority of legal keywords occupy unique conceptual positions in the embedding space and do not form repetitive template families. For each cluster, we computed the angular radius $\epsilon$ as the maximum angular distance from any member to the $L_2$-normalized cluster centroid. Table~\ref{tab:clique_radii} summarises the distribution.

\begin{table}[ht]
\centering
\caption{Angular radius distribution of CUAD boilerplate clusters. The theoretical bound is $12.92^{\circ}$.}
\label{tab:clique_radii}
\begin{tabular}{lc}
\toprule
Statistic & Value \\
\midrule
Mean $\bar{\epsilon}$  & $10.87^{\circ}$ \\
Median                 & $10.89^{\circ}$ \\
$p_{90}$               & $13.11^{\circ}$ \\
$p_{95}$               & $13.69^{\circ}$ \\
$p_{99}$               & $14.86^{\circ}$ \\
Max                    & $18.97^{\circ}$ \\
\midrule
Clusters with $\epsilon < 12.92^{\circ}$ & $4{,}667/5{,}313$ ($87.8\%$) \\
\bottomrule
\end{tabular}
\end{table}

\begin{figure}[htbp]
\centering
\includegraphics[width=\linewidth]{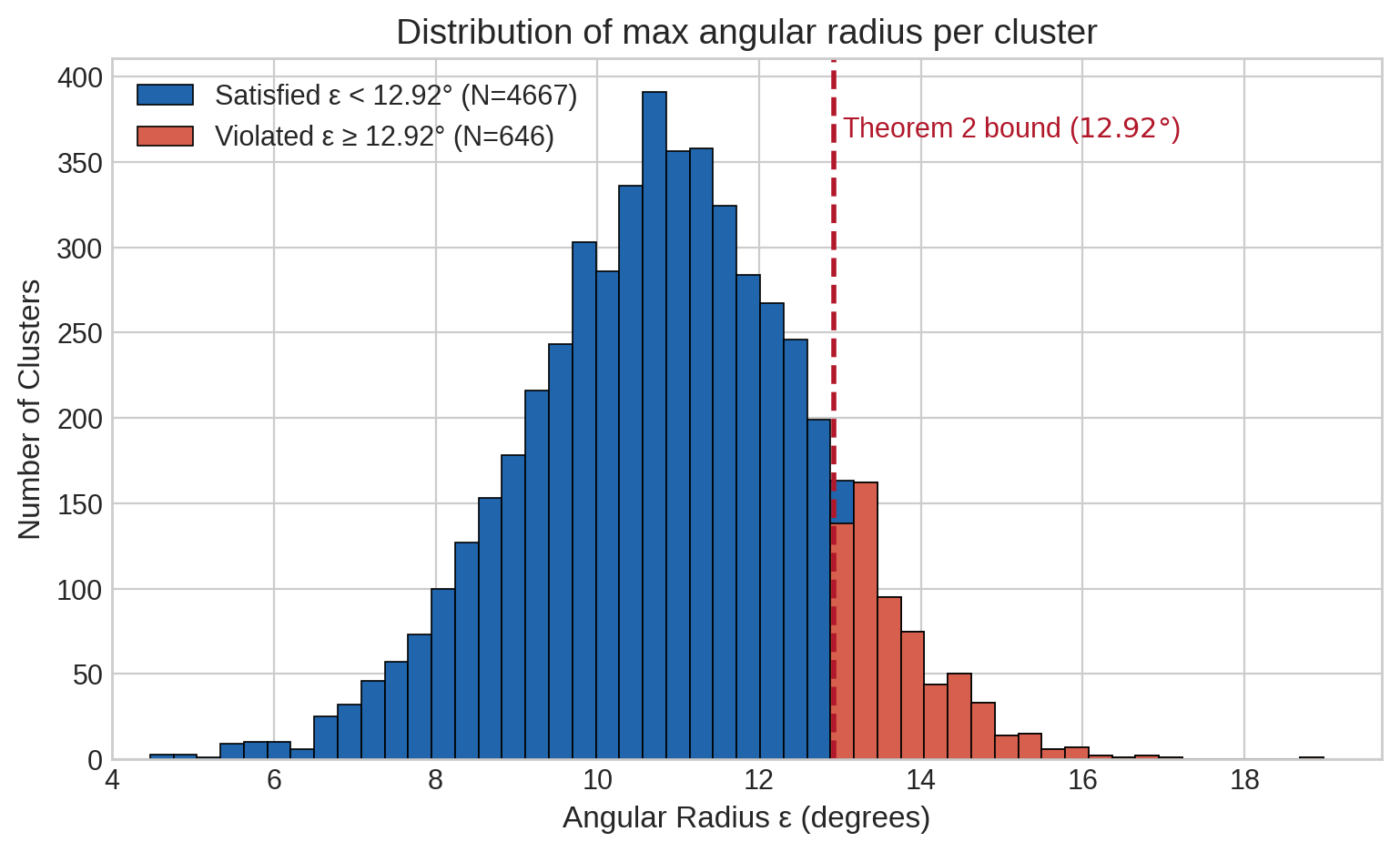}
\caption{Distribution of max angular radius $\epsilon$ per DBSCAN-identified
boilerplate cluster in CUAD ($n=5{,}313$ clusters).
Blue bars: clusters satisfying $\epsilon < 12.92^{\circ}$ ($N=4{,}667$,
$87.8\%$); red bars: clusters violating the bound ($N=646$, $12.2\%$).
The Theorem~\ref{thm:upper-bound} bound at $12.92^{\circ}$
($=\tfrac{1}{2}\arccos(0.90)$) is shown as a dashed red line.
The distribution peaks near $10$--$11^{\circ}$, confirming that the
large majority of CUAD boilerplate cliques fall within the geometric
escape guarantee of $\tau_{\max}=0.90$.}
\label{fig:clique_histogram}
\end{figure}

\begin{figure}[htbp]
\centering
\includegraphics[width=\linewidth]{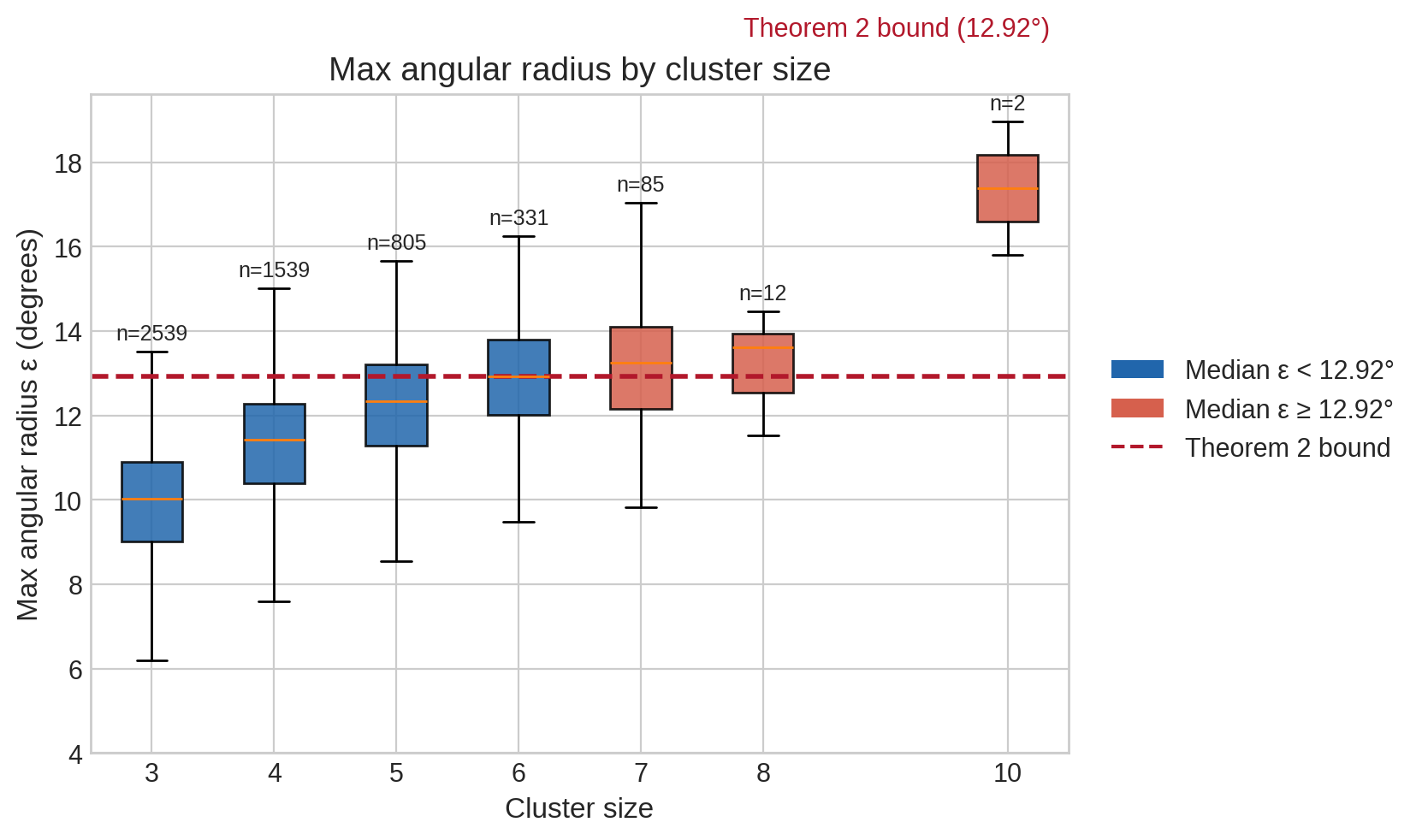}
\caption{Max angular radius $\epsilon$ by DBSCAN cluster size.
Each box shows the interquartile range; the orange line is the median;
whiskers extend to the 5th/95th percentile.
Blue boxes: median $\epsilon < 12.92^{\circ}$; red boxes: median
$\epsilon \geq 12.92^{\circ}$.
Sample counts $n$ are annotated above each box.
The absence of a systematic upward trend with cluster size confirms
that the Theorem~\ref{thm:upper-bound} geometric guarantee does not
degrade for larger boilerplate cliques within the observed size range.
The two clusters of size 10 ($n=2$) are included for completeness
but carry limited statistical weight.}
\label{fig:clique_scatter}
\end{figure}

For the $87.8\%$ majority, Theorem~\ref{thm:upper-bound} provides a strict geometric guarantee that $\tau_{\max}=0.90$ forces exit from the local cluster within one hop. For the remaining $12.2\%$ of clusters whose radius modestly exceeds the bound (up to $18.97^{\circ}$; $p_{99}=14.86^{\circ}$), two complementary constraints limit the practical impact: (1)~the synonym-exclusion filter $\tau_{\mathrm{syn}}=0.95$ independently eliminates near-identical nodes, and (2)~cluster sizes are small (median~$4$, max~$10$), making persistent local looping unlikely in practice. This dual-threshold design ensures forward chain momentum across the full CUAD embedding space.

\section{Additional Method Details}
\label{app:method}
We include implementation details for the graph-constrained synthesis algorithm here.

\textbf{Atomization and parsing.}
The extraction stage uses deterministic prompting with temperature 0.1.
The prompt requests a JSON object containing distinct keywords, atomic questions, and atomic answers.
The parser first attempts strict JSON decoding.
If decoding fails, an AST fallback parser and regular-expression sanitization repair common schema violations such as trailing commas, unescaped quotation marks, or malformed list delimiters.
Only validated atomic QA pairs enter the evidence database.

\textbf{Identifier discipline.}
Each atomic QA pair receives a globally unique, auto-incrementing integer identifier.
The identifier is independent of document boundaries.
This design makes generated answers traceable to source atoms.
It also enables the supplementary ID-gap audit in Appendix~\ref{app:split}.
That audit is not the primary split evidence.

\textbf{Path search details.}
For every unique keyword, the centroid text contains the keyword and up to two representative QA contexts.
All centroids are L2-normalized and indexed using FAISS \texttt{IndexFlatIP} \cite{johnson2019faiss}.
Candidate neighbors are first restricted to the top 100 nearest centroids.
They are then filtered by synonym exclusion, core association, predecessor compatibility, global drift, and lexical deduplication.
Lexical deduplication applies lowercased exact match, substring containment, character-overlap coefficient $>0.80$, and \texttt{difflib} sequence ratio $>0.60$ over visited keyword labels.
The search uses a bounded best-first depth-first traversal.
It caps branching at $B=5$ and grows paths to at most $L_{\max}=8$ nodes.
After enumeration, maximal-path pruning discards any path that is a strict prefix of a longer accepted path.
For the enumerated path set $\mathcal{P}$, the retained set is
\begin{equation}
\mathcal{P}_{\max}
=
\left\{
P\in\mathcal{P}:
\not\exists Q\in\mathcal{P}\ \mathrm{s.t.}\ P\prec Q
\right\},
\end{equation}
where $P\prec Q$ means that the ordered node sequence of $P$ is a strict prefix of $Q$.

\textbf{Teacher fusion details.}
The final path is serialized into evidence blocks.
Each node contributes at most three supporting evidence items.
The generation teacher $\mathcal{M}_{\text{gen}}$ runs at temperature 0.2.
It is prompted to write a complex question that requires all chain nodes and an answer that cites the corresponding evidence IDs.
The response schema requires a \texttt{complex\_question}, \texttt{complex\_answer}, and evidence registry.
The pipeline retries up to three times on API failure or schema violation.
Malformed IDs are normalized into a canonical \texttt{ID\_xxx} format when the target evidence item can be recovered from the source chain.
Otherwise the chain fails the quality gate.

\textbf{Path-enumeration pseudocode.}
Algorithm~\ref{alg:gcscs} compactly states the bounded constrained DFS used by Section~\ref{sec:method}.

\begin{algorithm}[ht]
\caption{Bounded constrained DFS for path enumeration (per seed node $v_1$).}
\label{alg:gcscs}
\begin{algorithmic}[1]
\REQUIRE Centroid set $V$, FAISS index, thresholds $(\tau_{\min},\tau_{\max},\tau_{\mathrm{syn}},\tau_{\mathrm{prev}}^{*}(\cdot),\tau_{\mathrm{drift}}^{*}(\cdot))$, $B$, $K$, $L_{\max}$.
\STATE $\mathcal{P}\leftarrow\emptyset$;\quad push partial path $P=(v_1)$ onto stack.
\WHILE{stack is not empty}
  \STATE Pop $P=(v_1,\ldots,u)$.
  \IF{$|P|\geq 2$} \STATE add $P$ to $\mathcal{P}$. \ENDIF
  \IF{$|P|=L_{\max}$} \STATE \textbf{continue}. \ENDIF
  \STATE $\mathcal{N}\leftarrow$ top-$K$ FAISS neighbors of $u$.
  \STATE $\mathcal{N}^{\mathrm{ok}}\leftarrow\{v\in\mathcal{N}\setminus P:\mathbf{I}_{\mathrm{edge}}(u,v;P)=1\}$ using Eq.~\eqref{eq:edge}.
  \STATE Sort $\mathcal{N}^{\mathrm{ok}}$ by $S(u,v)$ and keep the top $B$.
  \FORALL{$v\in\mathcal{N}^{\mathrm{ok}}$}
     \STATE push $P\cup(v)$.
  \ENDFOR
\ENDWHILE
\STATE $\mathcal{P}_{\max}\leftarrow\{P\in\mathcal{P}:\not\exists Q\in\mathcal{P}\text{ s.t.\ }P\prec Q\}$ \COMMENT{maximal-path pruning}
\RETURN $\mathcal{P}_{\max}$.
\end{algorithmic}
\end{algorithm}

\paragraph{Computational complexity.}
Algorithm~\ref{alg:gcscs} decomposes the offline synthesis cost into two
phases: dense retrieval and bounded DFS traversal.

\emph{Dense retrieval.}
All $|V|$ centroid vectors are indexed in a single FAISS \texttt{IndexFlatIP}
structure.
For each node $u$ expanded during DFS (line~7 of Algorithm~\ref{alg:gcscs}),
retrieving the top-$K$ nearest neighbours requires an exhaustive inner product
over the full index at cost $O(|V|\cdot d)$, where $d=1536$.
To avoid redundant computation across the $|V|$ seed-node DFS runs, each
node's top-$K$ neighbour list is precomputed once and cached before the
traversal phase; subsequent DFS expansions of the same node read from cache
at $O(1)$.
Under this single-pass retrieval strategy, the total cost of the dense
retrieval phase is $O(|V|^{2}\cdot d)$ in the worst case (all $|V|$ nodes
queried), and significantly less in practice because the stringent
$\mathbf{I}_{\mathrm{edge}}$ filter leaves many nodes unreachable.

\emph{Bounded DFS traversal.}
With neighbour lists precomputed, each DFS expansion reads $K=100$ pre-cached candidates and applies the admissibility filter to select at most $B=5$ successors.
The total number of states explored per seed node is bounded by
\begin{equation}
    |\text{States per seed}|
    \;\le\;
    \sum_{l=1}^{L_{\max}} B^{\,l}
    \;=\;
    \frac{B(B^{L_{\max}}-1)}{B-1}
    \;\approx\; 4.8\times10^{5}.
    \label{eq:dfs_states}
\end{equation}
Since $\sum_{l=1}^{L} B^{l} = \Theta(B^{L})$ (the geometric series is dominated by its final term), the traversal cost over all $|V|$ seed nodes is $O(|V|\cdot B^{L_{\max}})$. Because $B^{L_{\max}}$ and $d$ are constants independent of corpus size, both phases combine additively:
\begin{equation}
    \text{Total} \;=\;
    O\!\left(|V|^{2}\cdot d \;+\; |V|\cdot B^{L_{\max}}\right)
    \;=\; O\!\left(|V|^{2}\cdot d\right),
    \label{eq:total_complexity}
\end{equation}
where the retrieval phase dominates. This guarantees that the offline path-enumeration pipeline scales quadratically in the corpus size $|V|$, making it computationally tractable for corpus-scale synthesis.

\section{Training and Reproducibility Details}
\label{app:setup}
We report the training and compute details for Section~\ref{sec:exp}.

All fine-tuning experiments use Qwen3-32B with LoRA rank $r=8$, alpha 16, dropout 0.05, target modules set to \texttt{all-linear}, learning rate $1\times10^{-5}$, global batch size 64, maximum sequence length 4096, and one epoch.
Pre-existing adapter weights are discarded before each training run.
All reported SFT conditions use the same optimization settings unless they are explicitly labeled Mixed-SFT.
Experiments were conducted on NVIDIA H800 GPUs with 80GB memory.
Standard SFT runs used 8 H800 GPUs.
The Mixed-SFT run used 24 H800 GPUs.
The estimated total compute across reported training and inference experiments is approximately 2,400 GPU-hours.
The use of LoRA and QLoRA-style efficient fine-tuning follows prior parameter-efficient adaptation work \cite{hu2022lora,dettmers2023qlora}.

\section{Split and Overlap Analysis}
\label{app:split}
We record the source-contract split, the data construction funnel, and the supplementary overlap audit.

\begin{table}[tbp]
\centering
\small
\caption{Data construction funnel under the source-contract split. Final train QA denotes filtered semantic-chain QA before subsampling to the 80K SFT corpus. The held-out test uses 102 source contracts; 94 contract keys produce final QA after Step3/Step4 filtering.}
\label{tab:data_funnel}
\begin{tabular}{lrrrrrr}
\toprule
Split & Contracts & Chunks & Atomic QA & Embedded centroids & Semantic paths & Final QA \\
\midrule
Train & 357 & 16,931 & 69,654 & 46,401 & 85,499 & 80,799 \\
Test paths & 102 & 4,020 & 16,672 & 13,268 & 22,272 & --- \\
Test-GE & 94 & --- & --- & --- & --- & 21,186 \\
Test-DS & 94 & --- & --- & --- & --- & 19,238 \\
\bottomrule
\end{tabular}
\end{table}

The primary split is performed before chunking, atomization, path construction, and teacher fusion. 
Using seed 42, the 510 CUAD source contracts are split into 357 train, 51 development, and 102 test contracts. 
To strictly ensure data isolation, the held-out test set is constructed using an independent execution pipeline. 
Specifically, training paths are synthesized strictly from the atomized records of the training subset, while held-out paths are generated solely from the test subset. 
After path enumeration, the validated paths undergo teacher fusion, generating separate final evaluation datasets corresponding to the Gemini and DeepSeek teacher models. 
Only 94 of the 102 test contracts appear in the final QA composition because some contracts do not yield chains that survive Step 3 path construction and Step 4 teacher fusion.

\begin{table}[tbp]
\centering
\small
\caption{Held-out test set composition after teacher fusion. Hop buckets use the Step3 chain length written in metadata.}
\label{tab:test_composition}
\begin{tabular}{lrrrrr}
\toprule
Test set & Final QA & Avg. evidence IDs & 3-hop & 4--5-hop & $\geq$6-hop \\
\midrule
GE & 21,186 & 4.62 & 4,308 & 6,303 & 10,575 \\
DS & 19,238 & 5.58 & 4,183 & 5,793 & 9,262 \\
\bottomrule
\end{tabular}
\end{table}

We also estimate document-boundary overlap using jumps in globally unique evidence IDs.
A $p_{95}$ identifier-gap threshold of 15 is used to infer document boundaries.
This heuristic identifies 3 overlapping inferred document units and 303 overlapping QA pairs.
The estimated overlap does not increase across the 40K, 60K, and 80K training scales.
Because the main split is source-contract-level, this ID-gap result is only a secondary audit for progressive contamination.
It is not the basis of the split claim.

The remaining leakage risk is template-level rather than source-contract-level.
CUAD contracts can contain repeated boilerplate and near-duplicate legal clauses across different source files.
Future work should report exact duplicate, near-duplicate, answer-overlap, and evidence-overlap statistics across the source-contract split.

\section{Metric Definitions and Faithfulness Caveats}
\label{app:metrics}
We define the automatic metrics and state their limits.

\textbf{Token F1.}
Token F1 is computed between the model response and the gold synthetic answer after standard normalization.
It is useful for comparing models within a fixed answer style.
It can reward longer answers, repeated legal terminology, citation templates, and teacher-specific phrasing.
It does not identify whether the answer used all hops in the chain.

\textbf{Evidence Recall.}
Let $\mathrm{IDs}(y_i)$ denote the evidence IDs emitted by prediction $y_i$.
Let $\mathcal{E}_i^\star$ denote the gold evidence-ID set.
ER is the fraction of gold evidence IDs that appear in the model response:
\[
\mathrm{ER}
=
\frac{1}{N}\sum_{i=1}^{N}
\frac{|\mathrm{IDs}(y_i)\cap\mathcal{E}_i^\star|}
{|\mathcal{E}_i^\star|}.
\]
Higher is better, but ER is diagnostic only.
It measures exact recall of gold evidence identifiers.
It does not measure whether the cited evidence entails the answer.
The low ER values in the main tables show that citation-format learning is much stronger than exact evidence-ID recovery.

\textbf{Citation Format Rate.}
CFR measures whether the model follows the required citation syntax:
\[
\mathrm{CFR}
=
\frac{1}{N}\sum_{i=1}^{N}
\mathbf{1}\!\left[|\mathrm{IDs}(y_i)|>0\right].
\]
It does not measure whether the cited evidence entails the claimed reasoning step.
A model can achieve high CFR while citing an ID that is syntactically valid but semantically unsupported.
Future evaluation should add invalid-citation rate, evidence precision, citation-support entailment accuracy, human citation audits, and support ordering accuracy.

\section{Chain Topology Diagnostics}
\label{app:topology}
We report topology statistics for the semantic path builder.
These diagnostics describe path construction behavior.
They do not directly measure reasoning quality.

The train and held-out test path builders produce similar topology statistics.
The train split yields 85,499 paths with average chain length 5.66, average adjacent-hop similarity 0.7625, and average endpoint similarity 0.6094.
The held-out test split yields 22,272 paths with average chain length 5.57, average adjacent-hop similarity 0.7631, and average endpoint similarity 0.6141.
The chain-length distributions are also similar.
In train, lengths 3/4/5/6/7/8 account for 21.1/17.2/12.1/8.3/6.9/34.5\% of paths.
In test, the corresponding distribution is 21.9/18.1/11.9/8.7/7.1/32.1\%.
This stability shows that the held-out chain builder follows the same constrained topology regime as the training builder.
Human hop-necessity and citation-support validation remain future work.

\section{Chain-Length Bucket Stratification}
\label{app:chainlen}

Table~\ref{tab:chainlen} stratifies the in-pipeline closed-book results from Section~\ref{subsec:main} by Step3 chain length.
Gains are slightly larger in longer buckets, especially for DS-SFT at $\geq$$6$ hops.
This pattern is consistent with better use of multi-hop supervision but does not by itself prove improved long-chain reasoning, since longer buckets may differ in answer length, lexical overlap, and question difficulty.
Future work should add answer-length-normalized F1, evidence-ordering accuracy, and per-question hop-necessity audits.

\begin{table}[tbp]
\centering
\caption{Performance across chain-length buckets on the GE/DS test sets.}
\label{tab:chainlen}
\begin{tabular}{lcccccc}
\toprule
Chain Length & Base-DS & DS-SFT & $\Delta$pp & Base-GE & GE-SFT & $\Delta$pp \\
\midrule
$\leq$ 3-hop & 18.59\% & 35.98\% & 17.39 & 21.61\% & 37.32\% & 15.71 \\
4--5-hop & 19.12\% & 36.64\% & 17.52 & 21.63\% & 38.13\% & 16.50 \\
$\geq$ 6-hop & 19.09\% & 38.08\% & 18.99 & 21.69\% & 39.35\% & 17.66 \\
\bottomrule
\end{tabular}
\end{table}

\section{LegalBench Anti-Forgetting Check}
\label{app:legalbench}

Domain-specific fine-tuning can damage general capabilities \cite{kirkpatrick2017overcoming}.
We evaluate a small LegalBench subset \cite{guha2023legalbench,wang2023maud} as a check that GCSCS-driven SFT does not catastrophically forget related contract-understanding tasks.
Table~\ref{tab:legalbench} shows that GE-SFT and DS-SFT stay within $1\%$ of the Qwen3-32B-Base average ($91.71\%$ and $90.86\%$ vs.\ $91.90\%$).
This rules out catastrophic forgetting on this subset; we do not interpret it as evidence of broad legal-reasoning transfer.

\begin{table}[tbp]
\centering
\caption{Zero-shot transfer and forgetting analysis on a LegalBench subset.}
\label{tab:legalbench}
\begin{tabular}{lccc}
\toprule
LegalBench Task & Base & DS-SFT & GE-SFT \\
\midrule
consumer\_contracts\_qa & 95.71\% & 95.71\% & 95.71\% \\
contract\_nli\_notice\_on\_compelled\_disclosure & 97.18\% & 95.77\% & 97.18\% \\
cuad\_audit\_rights & 88.16\% & 86.76\% & 87.75\% \\
cuad\_non-compete & 85.75\% & 84.62\% & 86.43\% \\
\midrule
Average & 91.90\% & 90.86\% & 91.71\% \\
\bottomrule
\end{tabular}
\end{table}

\section{Limitations and Future Work}
\label{app:limitations}

The experiments in this paper are scoped to a single domain (CUAD legal contracts), a single base model (Qwen3-32B with LoRA), and a single embedding model (\texttt{text-embedding-v4}). We discuss the main methodological boundaries below.

\subsection{Scope of the yield claim}

The yield and downstream Token F1 gains are measured under a fixed pipeline configuration. All five admissibility thresholds ($\tau_{\min}$, $\tau_{\max}$, $\tau_{\mathrm{syn}}$, $\tau_{\mathrm{prev}}^{*}$, $\tau_{\mathrm{drift}}^{*}$) were calibrated on the CUAD development set; their behaviour on other high-boilerplate domains (e.g., SEC filings, medical guidelines) has not been evaluated. Whether the $[0.70, 0.90]$ admissibility band transfers without re-tuning is an open question.

The matched-size comparison at 18K (Table~\ref{tab:ablation}) is based on a single training seed. While the $0.15\%$ gap is consistent with a tie, multi-seed confidence intervals and a paired bootstrap test would provide stronger statistical footing for this conclusion. The data scaling curve in Figure~\ref{fig:scaling_combined} uses a single run per scale point for the same reason.

\subsection{Baseline coverage}

The current experiments compare GCSCS to random chain sampling at matched and full scale. Comparisons to BM25- or entity-graph-based chain construction, within-document random chains, and established path-before-generation pipelines such as FactCG~\cite{lei2025factcg} would further characterize the specific contribution of dense semantic retrieval to the yield improvement. These comparisons are natural extensions of the present work.

\subsection{Embedding and model sensitivity}

All centroid vectors are computed with a single encoder family. Replacing \texttt{text-embedding-v4} with alternative encoders (e.g., Sentence-BERT~\cite{reimers2019sentence}, SimCSE~\cite{gao2021simcse}) would establish whether the admissibility band is encoder-specific. Similarly, the downstream SFT experiments use only Qwen3-32B; results on smaller or differently pre-trained base models are not reported.

\subsection{Reasoning and grounding evaluation}

Token F1 and CFR measure lexical overlap and format adherence, respectively; neither directly measures multi-hop reasoning depth. Hop-necessity audits (evaluating whether each hop in the chain is causally required for the answer), per-citation NLI scoring, and blinded human evaluation of chain quality remain important directions for validating the reasoning content of the synthesized data.

\subsection{Citation behaviour under distribution shift}

The citation-as-suffix collapse documented in Section~\ref{subsec:openbook} shows that closed-book citation training and open-book RAG inference are structurally incompatible under the current schema. Mixed-format training partially mitigates but does not resolve this incompatibility. A training approach that conditions citation emission on the presence of retrieved evidence, rather than treating it as a positional suffix, is a natural direction for future work.

\section{Broader Impacts, Licensing, and Deployment Risks}
\label{app:broader}

\textbf{Positive impact.}
Lower-cost construction of domain-specific multi-hop SFT supervision can help small legal-tech teams that cannot afford human-authored chain annotation.
The yield observation also clarifies a practical bottleneck for synthetic data work in any high-boilerplate, repeat-clause domain: it is teacher-acceptability of candidate chains, not nominal generation volume, that limits scale.

\textbf{Fabricated legal authority is a concrete failure mode of this exact system.}
The closed-book Mixed-SFT model emits $\sim$$95\%$ syntactically valid \texttt{ID\_xxx} citations while exact gold-evidence-ID recall (ER) sits at $\sim$$2.4\%$.
Mechanically, this means the model is essentially always emitting an evidence-ID-shaped string, while the string almost never matches any specific evidence the answer actually depends on.
For a non-expert reader looking at the closed-book output --- well-formatted answer ending in citation IDs --- this looks like an authoritative legal citation.
For a deployment context, this is the failure mode of \emph{fabricated legal authority}: the system produces a confident citation to a clause-ID that has no semantic relationship to the claim.
The open-book CFR collapse to $8\%$ shows that the model does not robustly cite evidence that is actually present in the prompt.
We therefore explicitly state that the closed-book SFT system in this paper is \textbf{not safe to deploy} in any setting where a non-expert user might treat its citation output as a genuine legal reference.

\textbf{Required deployment safeguards.}
Any legal-workflow use of a model trained with this kind of synthetic citation supervision should require, at minimum: (i) inference-time retrieval-grounded verification (e.g., per-citation NLI between the cited text and the answer); (ii) abstention when no retrieved span entails the answer; (iii) audit logs that surface invalid citations to a reviewer; (iv) expert human review before any output is shown to an end user; and (v) explicit user-facing labelling that the system is a research artefact and not a legal advisor.
We do not claim that adding these safeguards would suffice; we claim only that they are necessary.

\textbf{Dataset license.}
CUAD \cite{hendrycks2021cuad} is released under CC BY 4.0.
Our use --- atomization, embedding, path enumeration, and teacher-fused multi-hop QA generation --- falls within the dataset's permitted use for research and adaptation provided that attribution is preserved.
We do not redistribute CUAD source contracts.
We intend to release the synthesized derivative QA examples and the synthesis pipeline code; commercial redistribution of derivatives is not authorized by us.

\textbf{Teacher-LLM terms-of-service.}
The synthesis pipeline calls two commercial teacher LLMs (Gemini and DeepSeek) for atomization and fusion.
Both providers' terms of service should be reviewed before any public release of the resulting student weights.
In particular, distillation/training-on-output clauses vary across providers and across model versions and may restrict release of weights derived from teacher output.
We treat this as a release-time constraint on the LoRA adapters and on the synthesized SFT corpus, and we will publish the explicit ToS check at public release time.

\textbf{Energy and compute footprint.}
The reported training and inference experiments use approximately $2{,}400$ NVIDIA H800 GPU-hours total (Appendix~\ref{app:setup}).
A non-trivial fraction is spent on teacher inference for the synthesis pipeline rather than on student training.
Practitioners reproducing this work should plan for both budgets and may prefer cheaper open-weight teachers when the goal is yield study rather than absolute quality.

\textbf{Negative-use considerations.}
The same yield mechanism --- ``find candidate evidence paths that the teacher comfortably verbalizes'' --- can in principle be used to generate large volumes of plausible-looking but unverified legal text.
Combined with the citation-as-suffix behavior we document, this creates a non-trivial misinformation surface in legal contexts.
We recommend that downstream users do not deploy citation-emitting closed-book legal QA models without an independent grounding verifier.

\section{Human-Annotated Multi-Hop Transfer}
\label{app:external}

To further bound whether in-pipeline gains reflect generalizable multi-hop reasoning, we run zero-shot closed-book inference on $3{,}500$ bridge-type HotpotQA \cite{yang2018hotpotqa} and $1{,}500$ answerable MuSiQue \cite{trivedi2022musique} validation questions (no supporting documents; paired $t$-tests on per-sample Token F1). Table~\ref{tab:external} summarizes the outcome: Token F1 improvements are at most ${\sim}0.6\%$---orders of magnitude smaller than the $+16\%$--$+18\%$ gains on our CUAD-style test---so the headline improvement is dominated by domain-specific answer style and citation-template adaptation rather than transferable multi-hop reasoning. The small non-zero HotpotQA lift rules out a degenerate ``no transfer'' story while confirming that any transfer remains marginal.

\begingroup
\setlength{\intextsep}{10pt plus 2pt minus 2pt}
\begin{table}[H]
\centering
\small
\caption{Zero-shot closed-book transfer on human-annotated benchmarks. No fine-tuning on either benchmark was performed; $p$-values are paired $t$-tests on per-sample Token F1.}
\label{tab:external}
\begin{tabular*}{\linewidth}{@{\extracolsep{\fill}}lcccccc}
\toprule
Model & \multicolumn{3}{c}{HotpotQA} & \multicolumn{3}{c}{MuSiQue} \\
\cmidrule(lr){2-4} \cmidrule(lr){5-7}
 & F1 & EM & $p$ & F1 & EM & $p$ \\
\midrule
Qwen3-32B-Base   & 22.97\% & 12.43\% & --- & 13.14\% & 3.13\% & --- \\
Qwen3-32B-GE-SFT & 23.53\% & 12.97\% & 0.035 & 13.65\% & 3.20\% & 0.136 \\
Qwen3-32B-DS-SFT & 23.56\% & 13.09\% & 0.021 & 13.61\% & 3.40\% & 0.197 \\
\bottomrule
\end{tabular*}
\end{table}
\endgroup

\end{document}